%% file: main.tex
\documentclass{article}

\usepackage[main, preprint]{neurips_2026}

\usepackage[utf8]{inputenc}
\usepackage[T1]{fontenc}
\usepackage{hyperref}
\usepackage{url}
\usepackage{microtype}
\usepackage{graphicx}
\usepackage{subcaption}
\usepackage{booktabs}
\usepackage{multirow}
\usepackage{xcolor}
\usepackage{nicefrac}
\usepackage{amsmath}
\usepackage{amssymb}
\usepackage{amsfonts}
\usepackage{mathtools}
\usepackage{amsthm}
\usepackage{algorithm}
\usepackage{algorithmic}
\usepackage{float}
\usepackage{comment}
\usepackage{tikz}
\usepackage{pgfplots}
\pgfplotsset{compat=1.18}

\providecommand{\State}{\STATE}
\providecommand{\For}{\FOR}
\providecommand{\EndFor}{\ENDFOR}

\input{math_commands.tex}

\input{lpsymbols}

\theoremstyle{plain}
\newtheorem{theorem}{Theorem}[section]

\theoremstyle{definition}

\theoremstyle{remark}

\newcommand{\chisq}{\chi^2}

\makeatletter
\renewcommand{\eqref}[1]{\textup{\tagform@{\ref{#1}}}}
\makeatother

\title{Distributionally Robust Token Optimization in RLHF}

\author{%
  Yeping Jin \\
  Department of System Engineering \\
  Boston University, Boston, MA 02215, USA \\
  \texttt{yepjin@bu.edu} \\
  \And
  Jiaming Hu \\
  Department of Math \& Statistics \\
  Boston University, Boston, MA 02215, USA \\
  \texttt{jh7453@bu.edu} \\
  \And
  Ioannis Ch. Paschalidis \\
  Department of System Engineering \\
  Boston University, Boston, MA 02215, USA \\
  \texttt{yannisp@bu.edu} \\
}

\begin{document}

\maketitle

\begin{abstract}
Large Language Models (LLMs) tend to respond correctly to prompts that align well with the data they were trained and fine-tuned on. Yet, small shifts in wording, format, or language can trigger surprisingly large failures, especially on multi-step reasoning problems. To address this problem, we propose a \textbf{Distributionally Robust Token Optimization (DRTO)} approach, which combines token-level Reinforcement Learning from Human Feedback (RLHF) with  \textit{Distributionally Robust Optimization (DRO)}. DRTO constructs f-divergence ambiguity sets over span-level actor losses, providing a principled way to emphasize difficult response segments during policy optimization. Empirically, DRTO enhances consistency under distribution shifts in multiple reasoning benchmarks among different tasks, achieving $+4.4$ percentage points on \texttt{MATH-500} and $+2.7$ percentage points on \texttt{LiveCodeBench} over standard RTO. Our code is available at \href{https://osf.io/zwft2/overview?view_only=857bc10a58f34a6e9a39bd2d761d1fa5}{OSF} .
\end{abstract}

\section{Introduction}

Large language models (LLMs) have achieved strong performance on a wide range of language and reasoning tasks, and reinforcement learning from human feedback (RLHF) has become a practical and widely used approach to align these models with human preferences. In modern post-training pipelines, two mainstream choices are Proximal Policy Optimization (PPO)~\citep{ppo} and Direct Preference Optimization (DPO)~\citep{rafailov2023direct}. Despite their success, both methods can be vulnerable to distributional shifts between the preference data used for alignment and the prompts or evaluation conditions encountered at deployment. In practice, such vulnerability may appear as training instability and reduced robustness. Small changes in phrasing, notation, or other superficial input variations can lead to substantial drops in accuracy.

Due to this instability issue, recent work has begun to modify PPO and DPO models to enhance robustness~\citep{drdpo,chipo}. However, existing robust variants are often limited by increased conservatism or additional computational overhead, and they usually do not fully exploit the token-level credit assignment. As a result, they may struggle to improve out-of-distribution robustness without sacrificing strong in-distribution performance, especially on reasoning-intensive tasks.

To address these limitations, we develop a distributionally robust method, building on a recent token-level alignment framework, \emph{Reinforced Token Optimization} (RTO)~\citep{rto}, which already demonstrates consistent advantages over PPO and DPO by combining PPO-style policy optimization with a token-wise reward signal learned from preferences. We propose \emph{Distributionally Robust Token Optimization} (DRTO), which adapts \emph{Distributionally Robust Optimization} (DRO) to the RTO actor update through span-level divergence-based ambiguity sets, including KL and $\chi^2$ divergence. An overview of the resulting pipeline is shown in Figure~\ref{fig:overview}. Such a formulation yields more stable updates and improves consistency on out-of-distribution evaluations, while retaining RTO's performance advantages over DPO and PPO in our setting.

\begin{figure}[!htbp]
\centering
\includegraphics[width=\linewidth]{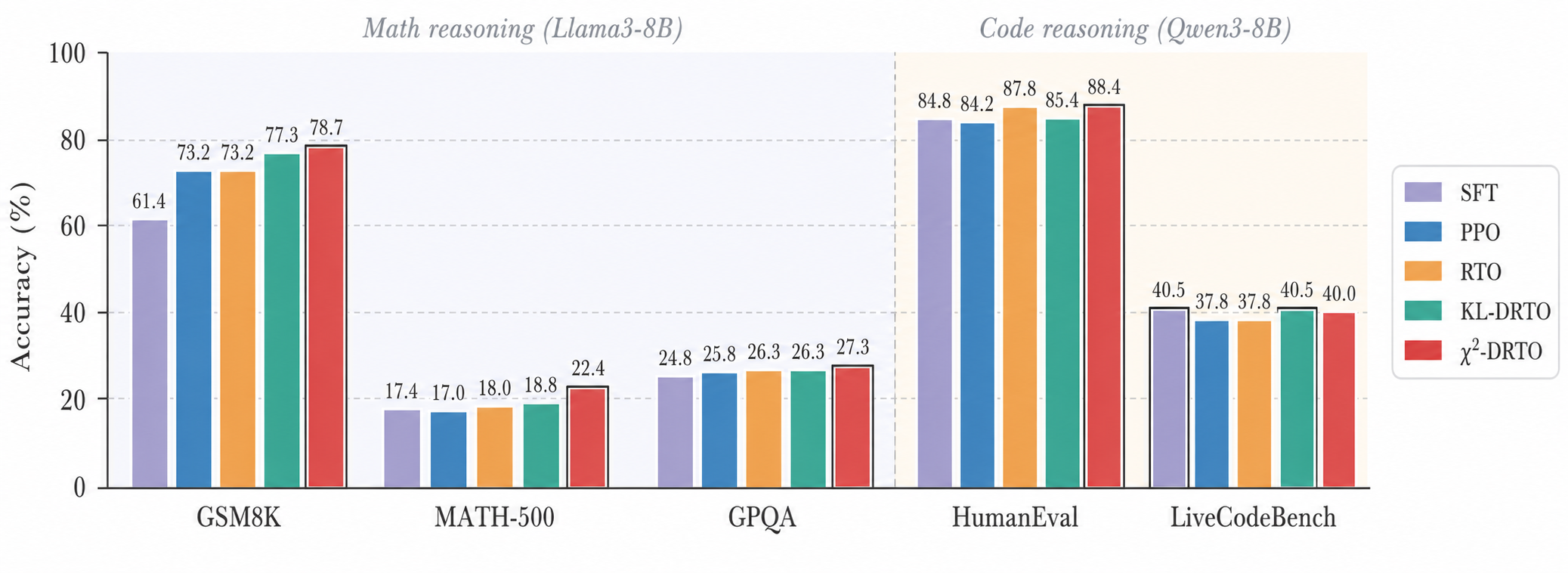}
\caption{Performance of five RLHF fine-tuning methods on math and coding benchmarks under distribution shift. DRTO improves over standard RTO on most benchmarks.}
\label{fig:headline}
\end{figure}

\subsection{Contributions}

This work enhances the robustness of RLHF via a span-level DRO approach. We summarize our main contributions as follows.
\begin{itemize}

\item \textbf{Distributionally robust span-level objectives.}
We introduce ambiguity sets over response spans and derive robust objectives for the actor loss. Each response is partitioned into contiguous spans, where each span is assigned a mean PPO surrogate loss, and DRO is applied directly over the resulting span losses. To the best of our knowledge, we are not aware of prior work that applies DRO over response segmentations within an RLHF framework.

\item \textbf{Realization of span DRO.} We use two distinct f-divergences to measure the ambiguity set, leading to two different practical realizations. For KL-DRTO, we obtain an entropic objective that smoothly emphasizes high-loss spans through exponential tilting; for $\chisq$-DRTO, we obtain a tractable surrogate based on the response-balanced span mean and standard deviation.

\item \textbf{Empirical improvements under distribution shifts.}
Our practical implementations of KL-DRTO and $\chisq$-DRTO use the same training pipeline as standard RTO, with little to no additional runtime or compute cost. Empirically, both methods yield more consistent performance under linguistic and symbolic shifts on math reasoning tasks, while preserving strong in-domain performance. As summarized in Figure~\ref{fig:headline}, across the five math benchmarks DRTO attains up to $+5.5$ percentage points absolute accuracy improvement over standard RTO, including $+4.4$ on \texttt{MATH-500}, and is the only family that never regresses on any math benchmark; on coding, $\chisq$-DRTO matches or exceeds RTO on every benchmark (including $+2.12$ percentage points on \texttt{LiveCodeBench}), while KL-DRTO obtains the largest \texttt{LiveCodeBench} gain of $+2.70$ percentage points. Subsequent ablation studies further validate DRTO's efficiency.
\end{itemize}

In summary, we develop a theoretically grounded and practically effective approach for robust token-level RLHF, which also inherits the reasoning advantages of token-level methods over standard PPO and DPO.

\begin{figure}[!htbp]
\centering
\includegraphics[width=\linewidth]{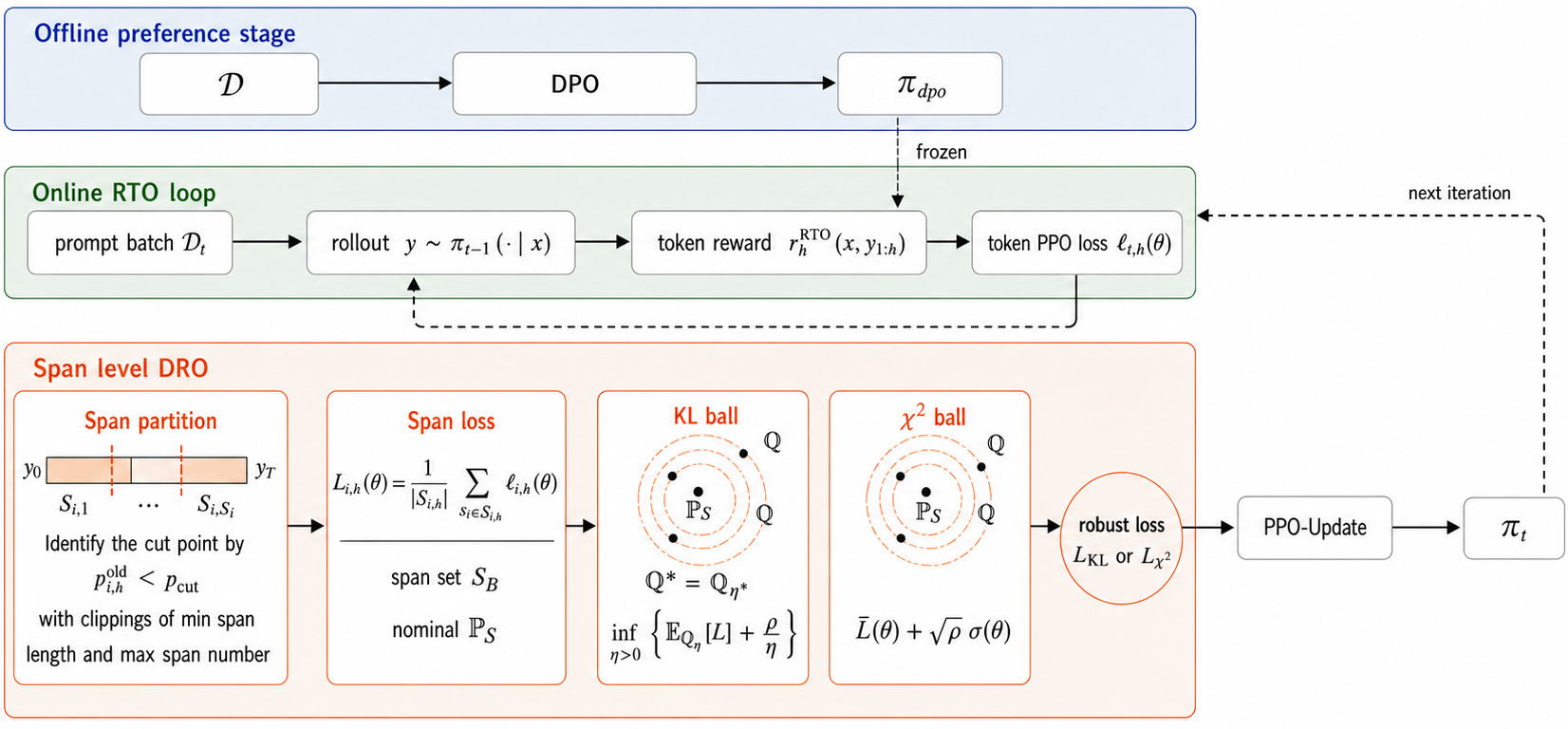}
\caption{Overview of the DRTO pipeline. Each rollout is partitioned into spans at low-confidence cut points (where $p^{\mathrm{old}}_{i,h}<p_{\mathrm{cut}}$); per-token PPO losses driven by the RTO token reward are aggregated into span losses, which are then reweighted by a KL- or $\chi^2$-divergence DRO adversary to form the final actor loss.}
\label{fig:overview}
\end{figure}

\subsection{Related Work}

\textbf{Token-level RLHF.} RLHF~\citep{ouyang2022training,bai2022constitutional} is widely used to align LLMs with human preferences. Among all popular frameworks, PPO~\citep{ppo} remains a common choice for reinforcement-based fine-tuning, typically with a KL control term to limit deviation from a reference policy. As an alternative mainstream approach, DPO~\citep{rafailov2023direct} reformulates preference optimization as a supervised objective on pairwise comparisons, avoiding explicit on-policy rollouts. To enhance the two popular methods with sequence-level objectives, token-level alignment methods aim to improve credit assignment and make training less brittle. For instance, Token-level DPO (TDPO)~\citep{zeng2024tdpo} introduces a token-wise formulation of the preference objective. In this paper, we build on Reinforced Token Optimization (RTO)~\citep{rto}, which adds a token-level reward that involves DPO model in the standard PPO framework, enabling more localized learning signals. Even so, token-level training can be sensitive to reward noise, difficult response segments, and minibatch composition, which motivates robust objectives that enhance stability under such uncertainty.

\textbf{Distributionally robust optimization for preference alignment.} DRO provides a principled framework for learning under distributional uncertainty~\citep{chen2021drl,rahimian2019distributionally,duchi2021statistics}, and divergence-based ambiguity sets yield tractable dual forms linked to variance-sensitive and entropic-risk objectives~\citep{namkoong2017variance}. Recent alignment work has applied DRO to offline preference optimization: DrDPO~\citep{drdpo} robustifies DPO against data noise via distributionally robust reweighting, and $\chi^2$-Preference Optimization~\citep{chipo} addresses offline over-optimization via $\chi^2$-based regularization. Group-robust preference optimization (GRPO)~\citep{grpo} maximizes worst-group performance when group labels are available. In contrast, DRTO enhances the robustness of the on-policy RTO actor loss by applying either $\chi^2$ ambiguity sets or KL regularization over span-level PPO losses, while also inheriting RTO's advantages over DPO and PPO under in-distribution settings.

\textbf{Segment-level RL for LLMs.} A growing line of work segments responses for finer-grained learning signals: \citet{yin2025segmenting} and \citet{li2024adaptive} learn segment-level rewards via dynamic or semantic boundaries. Our span construction shares the segment-level granularity but is used for divergence-based DRO reweighting rather than reward shaping or advantage smoothing.

\textbf{Robustness and safety beyond DRO.} A complementary line of work targets adversarial robustness and safety. White-box methods such as GradSafe~\citep{xie2024gradsafe}, Gradient Cuff~\citep{hu2024gradient}, and Robust Prompt Optimization~\citep{zhou2024robust} exploit internal model signals or gradient-based suffix optimization, while black-box defenses such as SmoothLLM~\citep{robey2023smoothllm} and CCFC~\citep{hu2025ccfc} aggregate over input perturbations or dual-track prompts. These approaches are orthogonal to DRTO: they target safety against adversarial prompts, whereas DRTO targets consistent performance under natural distribution shift.

\section{Preliminaries}
\label{sec:prelim}

In this section, we introduce the standard token-level optimization framework for RLHF and establish the notation used throughout the paper.

\subsection{Proximal Policy Optimization}

The RLHF objective maximizes a learned reward $r(x,y)$ while keeping the policy $\pi_\theta$ close to a reference $\pi_{\mathrm{ref}}$ via a KL penalty~\citep{ouyang2022training}. PPO optimizes this objective with clipped on-policy updates. Concretely, we sample a minibatch of $n$ rollouts $\mathcal B=\{\tau_i=(x_i,y_i)\}_{i=1}^n$, where each $x_i$ is an input prompt and $y_i=(y_{i,1},\ldots,y_{i,H_i})$ is the response of length $H_i$ generated by the rollout policy $\pi_{\mathrm{old}}$ (the snapshot of $\pi_\theta$ used to collect the current minibatch, refreshed at each outer iteration), with each token $y_{i,h}$ drawn from the model's vocabulary. At answer position $h\in[H_i]$, let $s_{i,h}:=(x_i,y_{i,1:h-1})$ denote the partial context (prompt and tokens generated so far) and $a_{i,h}:=y_{i,h}$ the next-token action; the PPO importance-sampling ratio is then $\rho_{i,h}(\theta)=\pi_\theta(a_{i,h}\mid s_{i,h})/\pi_{\mathrm{old}}(a_{i,h}\mid s_{i,h})$, where $\pi_{\mathrm{old}}$ is the rollout policy and is distinct from the frozen reference $\pi_{\mathrm{ref}}$ used only in the KL regularization term. The per-token advantage $\widehat A_{i,h}\in\mathbb R$ estimates how much better action $a_{i,h}$ is than the policy's average behavior in state $s_{i,h}$, computed from the per-token rewards using generalized advantage estimation (GAE)~\citep{schulman2015gae}; the clip function $\mathrm{clip}(u,1-\epsilon,1+\epsilon)$ truncates $u$ to the interval $[1-\epsilon,1+\epsilon]$. The PPO clipped surrogate then defines the token-level actor loss
\begin{equation}
\begin{aligned}
\ell_{i,h}(\theta)
:=-
\min\!\Big(
\rho_{i,h}(\theta)\widehat A_{i,h},
\mathrm{clip}\big(\rho_{i,h}(\theta),1-\epsilon,1+\epsilon\big)\widehat A_{i,h}
\Big),
\end{aligned}
\label{eq:ppo_token_loss_prelim}
\end{equation}
where $\epsilon>0$ is the clipping parameter. The response-level and minibatch PPO losses are
\begin{equation}
L_i^{\mathrm{resp}}(\theta):=\tfrac{1}{H_i}\sum_{h\in[H_i]}\ell_{i,h}(\theta),\qquad L_{\mathrm{PPO}}(\theta)=\tfrac{1}{n}\sum_{i=1}^n L_i^{\mathrm{resp}}(\theta).
\label{eq:ppo_surrogate_prelim}
\end{equation}
We use \texttt{PPO-Update} to denote one minimization step on this loss.

\subsection{Reinforced Token Optimization}
\label{subsec:prelim_rto}

RTO~\citep{rto} keeps the PPO actor update but replaces the sparse terminal reward with a dense token-wise shaping reward derived from a preference policy $\pi_{\mathrm{dpo}}$ trained from offline preference data. Starting from $\pi_{\mathrm{ref}}$, at answer-token position $h\in[H_i]$, the shaping reward is
\begin{equation}
\begin{aligned}
&r^{\mathrm{RTO}}_h(x,y_{1:h})
=
\beta_1 \log\frac{\pi_{\mathrm{dpo}}(y_h\mid x,y_{1:h-1})}
{\pi_{\mathrm{ref}}(y_h\mid x,y_{1:h-1})} \\
&\quad
-\beta_2 \log\frac{\pi_{\theta}(y_h\mid x,y_{1:h-1})}
{\pi_{\mathrm{ref}}(y_h\mid x,y_{1:h-1})}
+\beta_3 \mathbf{1}_{\{h=H_i\}}\,r(x,y_{1:H_i}),
\end{aligned}
\label{eq:rto_token_reward_prelim}
\end{equation}
where $\beta_1,\beta_2,\beta_3>0$ are tuning parameters and the last term is a terminal reward applied only at $h=H_i$. Algorithm~\ref{alg:rto_practical} formulates the procedure: at each iteration, the DPO model is used to compute token-wise rewards via \eqref{eq:rto_token_reward_prelim}, and a PPO update follows.

\begin{algorithm}[tb]
\caption{Standard RTO}
\label{alg:rto_practical}
\begin{algorithmic}[1]
\State \textbf{Input:} offline dataset $\mathcal D$; $\beta_1,\beta_2,\beta_3>0$; DPO algorithm \texttt{DPO}; PPO trainer \texttt{PPO-Update}.
\State Train $\pi_{\mathrm{dpo}}\leftarrow \texttt{DPO}(\mathcal D)$; initialize $\pi_{0}=\pi_{\mathrm{ref}}$.
\For{$t=1,\ldots,T$}
  \State Sample a batch of prompts $D_t$.
  \State For each $x\in D_t$, generate response $y\sim \pi_{t-1}(\cdot\mid x)$.
  \State Compute token-wise rewards $\{r^{\mathrm{RTO}}_h(x,y_{1:h})\}_{h\in[H]}$ by \eqref{eq:rto_token_reward_prelim}.
  \State Update $\pi_t \leftarrow \texttt{PPO-Update}(\pi_{t-1}, r^{\mathrm{RTO}}, D_t)$.
\EndFor
\State \textbf{Output:} $\pi_T$.
\end{algorithmic}
\end{algorithm}
\section{Span-Level Distributionally Robust Token Optimization}
\label{sec:method}

We now formulate a span-level DRO objective that models distribution shift via adversarial reweighting over response segments. Span-level reweighting can be viewed as a fine-grained generalization of trajectory-level reweighting: it can up-weight difficult segments inside each response while still aggregating to a valid actor loss. Throughout this section we work with the same on-policy minibatch $\mathcal B=\{\tau_i=(x_i,y_i)\}_{i=1}^n$ of $n$ prompt-response rollouts as in Section~\ref{sec:prelim}, sampled at each PPO-style iteration; the adversarial reweighting is applied within $\mathcal B$ at every update.

\subsection{Span-Level DRO Objective}
\label{subsec:span_dro_objective}

\paragraph{Span partition.}
For the $i$-th response, we partition the answer-token positions $[H_i]$ into $S_i$ contiguous spans
\[
\mathcal G_{i,1},\ldots,\mathcal G_{i,S_i},\qquad \mathcal G_{i,s}\subseteq[H_i].
\]
We place span cut points at low-confidence rollout positions, i.e., positions whose stored probability $p^{\mathrm{old}}_{i,h}$ falls below a cut probability threshold $p_{\mathrm{cut}}$, subject to a minimum span length and a maximum number of spans per response. Low-probability tokens often mark transitions between reasoning steps or syntactic units, so they form natural span boundaries; the full procedure is deferred to Appendix~\ref{app:implementation}. We let
\[
\mathcal S_{\mathcal B}:=\{(i,s):i\in[n],\ s\in[S_i]\}
\]
index all spans in the minibatch $\mathcal B$.

\paragraph{Span loss.}
Reusing the per-token PPO loss $\ell_{i,h}(\theta)$ in \eqref{eq:ppo_token_loss_prelim}, the loss of span $\mathcal G_{i,s}$ is its mean token loss
\begin{equation}
L_{i,s}(\theta):=\frac{1}{|\mathcal G_{i,s}|}\sum_{h\in\mathcal G_{i,s}}\ell_{i,h}(\theta),
\label{eq:span_loss_def}
\end{equation}
which recovers the response-level loss \eqref{eq:ppo_surrogate_prelim} when $S_i=1$.

\paragraph{Span-level DRO.}
We assign each response equal mass $1/n$ and split it uniformly across its spans, defining the response-balanced nominal span distribution
\begin{equation}
\mathbb{P}_{\mathcal S}(i,s):=\frac{1}{n S_i},\qquad (i,s)\in\mathcal S_{\mathcal B}.
\label{eq:span_nominal_distribution}
\end{equation}
Given an ambiguity set $\Omega(\mathbb{P}_{\mathcal S})$ around $\mathbb{P}_{\mathcal S}$, DRTO replaces the mean PPO loss with the robust value
\begin{equation}
R(\theta;\rho):=\sup_{\mathbb{Q}\in\Omega(\mathbb{P}_{\mathcal S})}\E_{(i,s)\sim\mathbb{Q}}\big[L_{i,s}(\theta)\big],
\label{eq:generic_span_dro}
\end{equation}
which reduces to $L_{\mathrm{PPO}}(\theta)$ when $\rho=0$ and $S_i\equiv 1$. The following sections instantiate \eqref{eq:generic_span_dro} using KL and Pearson $\chi^2$ divergences.

\subsection{DRTO on a KL-Divergence Ambiguity Set}
\label{subsec:kl_drto}

We first instantiate \eqref{eq:generic_span_dro} with a KL-divergence ambiguity set
\begin{equation}
\Omega^{\mathrm{KL}}_\rho(\mathbb{P}_{\mathcal S})
=
\left\{\mathbb{Q}:\ D_{\mathrm{KL}}(\mathbb{Q}\Vert \mathbb{P}_{\mathcal S})\le \rho\right\}.
\label{eq:kl_ball}
\end{equation}
On the finite span set, the inner maximization in \eqref{eq:generic_span_dro} admits an exact one-dimensional dual~\cite{HuHong2012KLDRO}. For any $\eta>0$, we define the entropic risk
\begin{equation}
\Psi_\eta(\theta)
:=
\frac{1}{\eta}\log\E_{(i,s)\sim\mathbb{P}_{\mathcal S}}\!\left[\exp\!\left(\eta L_{i,s}(\theta)\right)\right],
\label{eq:psi_def}
\end{equation}
and the corresponding exponential-tilt weights
\begin{equation}
\mathbb{Q}_\eta(i,s)
:=
\frac{\mathbb{P}_{\mathcal S}(i,s)\exp(\eta L_{i,s}(\theta))}
{\sum_{(j,r)\in\mathcal S_{\mathcal B}}\mathbb{P}_{\mathcal S}(j,r)\exp(\eta L_{j,r}(\theta))}.
\label{eq:softmax_Q_main}
\end{equation}

\begin{figure}[tb]
\begin{minipage}[t]{0.47\linewidth}
\begin{algorithm}[H]
\caption{KL-DRTO}
\label{alg:kl_drto_true}
\begin{algorithmic}[1]
\State \textbf{Input:} $\mathcal D$; $\beta_{1:3},\rho>0$; \texttt{DPO}; \texttt{PPO-Update}.
\State Train $\pi_{\mathrm{dpo}}\!\leftarrow\!\texttt{DPO}(\mathcal D)$; init $\pi_{0}\!=\!\pi_{\mathrm{ref}}$.
\For{$t=1,\ldots,T$}
    \State Sample $D_t$ and generate $y\sim\pi_{t-1}(\cdot|x)$.
    \State Compute $\{r^{\mathrm{RTO}}_h\}$ by \eqref{eq:rto_token_reward_prelim}.
    \State Build spans $\mathcal S_{\mathcal B}$ and $\{L_{i,s}\}$ via \eqref{eq:span_loss_def}.
    \State $\eta^\star\!=\!\arg\min_{\eta>0}\{\Psi_\eta(\theta)+\rho/\eta\}$ \eqref{eq:psi_def}.
    \State $\mathbb{Q}^\star\!=\!\mathbb{Q}_{\eta^\star}$ via \eqref{eq:softmax_Q_main}.
    \State Set $L_{\mathrm{KL}}(\theta)=\sum_{(i,s)}\mathbb{Q}^\star(i,s)L_{i,s}(\theta)$.
    \State $\pi_t\!\leftarrow\!\texttt{PPO-Update}(\pi_{t-1},r^{\mathrm{RTO}},L_{\mathrm{KL}})$.
\EndFor
\State \textbf{Output:} $\pi_T$.
\end{algorithmic}
\end{algorithm}
\end{minipage}\hfill
\begin{minipage}[t]{0.47\linewidth}
\begin{algorithm}[H]
\caption{$\chi^2$-DRTO}
\label{alg:chi2_drto}
\begin{algorithmic}[1]
\State \textbf{Input:} $\mathcal D$; $\beta_{1:3}>0$, $\rho,\delta>0$; \texttt{DPO}; \texttt{PPO-Update}.
\State Train $\pi_{\mathrm{dpo}}\!\leftarrow\!\texttt{DPO}(\mathcal D)$; init $\pi_{0}\!=\!\pi_{\mathrm{ref}}$.
\For{$t=1,\ldots,T$}
    \State Sample $D_t$ and generate $y\sim\pi_{t-1}(\cdot|x)$.
    \State Compute $\{r^{\mathrm{RTO}}_h\}$ by \eqref{eq:rto_token_reward_prelim}.
    \State Build spans $\mathcal S_{\mathcal B}$ and $\{L_{i,s}\}$ via \eqref{eq:span_loss_def}.
    \State $\bar L\!=\!\E_{\mathbb{P}_{\mathcal S}}[L_{i,s}]$, $\Var(L)\!=\!\E_{\mathbb{P}_{\mathcal S}}[(L_{i,s}\!-\!\bar L)^2]$.
    \State Set $L_{\chi^2}(\theta)=\bar L+\sqrt{\rho}\sqrt{\Var(L)+\delta}$.
    \State $\pi_t\!\leftarrow\!\texttt{PPO-Update}(\pi_{t-1},r^{\mathrm{RTO}},L_{\chi^2})$.
\EndFor
\State \textbf{Output:} $\pi_T$.
\end{algorithmic}
\end{algorithm}
\end{minipage}
\end{figure}

\begin{theorem}[Guarantee of KL-DRTO]
\label{thm:kl_drto_main}
Fix any minibatch $\mathcal B$ with span set $\mathcal S_{\mathcal B}$, and assume $L_{i,s}(\theta)$ is finite for all $(i,s)\in\mathcal S_{\mathcal B}$. Then the KL-robust value \eqref{eq:generic_span_dro} admits the exact dual representation
\begin{equation}
R_{\mathrm{KL}}(\theta;\rho)
=
\inf_{\eta>0}\left\{\Psi_\eta(\theta)+\frac{\rho}{\eta}\right\}.
\label{eq:kl_exact_dual_main}
\end{equation}
Moreover, if the infimum in \eqref{eq:kl_exact_dual_main} is attained at some $\eta^\star\in(0,\infty)$, then the worst-case reweighting is $\mathbb{Q}^\star=\mathbb{Q}_{\eta^\star}$, and
\begin{equation}
R_{\mathrm{KL}}(\theta;\rho)
=
\sum_{(i,s)\in\mathcal S_{\mathcal B}}\mathbb{Q}^\star(i,s)\,L_{i,s}(\theta).
\label{eq:kl_primal_value_main}
\end{equation}
\end{theorem}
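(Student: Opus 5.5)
Our approach is to treat the inner maximization in \eqref{eq:generic_span_dro} as a finite-dimensional convex program over the simplex and dualize only the KL constraint. Write $p_k:=\mathbb{P}_{\mathcal S}(k)>0$ and $L_k:=L_{i,s}(\theta)$ for $k=(i,s)\in\mathcal S_{\mathcal B}$. Since $p_k>0$ for every $k$, any $\mathbb{Q}$ with $D_{\mathrm{KL}}(\mathbb{Q}\Vert\mathbb{P}_{\mathcal S})\le\rho$ is supported on $\mathcal S_{\mathcal B}$. The primal problem is then
\[
\max\Big\{\textstyle\sum_k q_k L_k\ :\ q\in\Delta,\ \sum_k q_k\log(q_k/p_k)\le\rho\Big\}.
\]
Its objective is linear, its feasible set is compact and convex, and $q=p$ is strictly feasible when $\rho>0$. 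Slater's condition therefore holds, and strong duality gives
\[
R_{\mathrm{KL}}=\inf_{\eta>0}\ \max_{q\in\Delta}\Big\{\textstyle\sum_k q_kL_k-\tfrac1\eta\big(\textstyle\sum_k q_k\log(q_k/p_k)-\rho\big)\Big\}.
\]
Here we parametrize the multiplier as $\lambda=1/\eta$ and restrict to $\lambda>0$. This restriction is harmless because the $\lambda\to0$ limit is recovered as $\eta\to\infty$.

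The inner maximization is the Gibbs (Donsker--Varadhan) variational formula:
\[
\max_{q\in\Delta}\Big\{\textstyle\sum_k q_kL_k-\tfrac1\eta\sum_k q_k\log(q_k/p_k)\Big\}=\tfrac1\eta\log\textstyle\sum_k p_k e^{\eta L_k}=\Psi_\eta(\theta).
\]
The maximizer is exactly $\mathbb{Q}_\eta$ from \eqref{eq:softmax_Q_main}. We will verify this by noting that, for any $q$, the difference between $\Psi_\eta$ and the bracket equals $\tfrac1\eta D_{\mathrm{KL}}(q\Vert\mathbb{Q}_\eta)\ge0$, with equality iff $q=\mathbb{Q}_\eta$. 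Substituting yields \eqref{eq:kl_exact_dual_main}. Finiteness of the $L_k$ guarantees that $\Psi_\eta$ is finite and smooth in $\eta$.

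For the second claim, suppose $\eta^\star\in(0,\infty)$ attains the infimum. Then $\tfrac{d}{d\eta}\{\Psi_\eta+\rho/\eta\}=0$ at $\eta^\star$. A direct computation gives
\[
\tfrac{d}{d\eta}\Psi_\eta=\tfrac{1}{\eta^2}\big(\eta\,\E_{\mathbb{Q}_\eta}[L]-\log\E_{\mathbb{P}}[e^{\eta L}]\big)=\tfrac1{\eta^2}D_{\mathrm{KL}}(\mathbb{Q}_\eta\Vert\mathbb{P}_{\mathcal S}),
\]
so the stationarity condition reads $D_{\mathrm{KL}}(\mathbb{Q}_{\eta^\star}\Vert\mathbb{P}_{\mathcal S})=\rho$. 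Thus $\mathbb{Q}_{\eta^\star}$ is primal feasible with the constraint active. We then use the identity $\Psi_\eta=\E_{\mathbb{Q}_\eta}[L]-\tfrac1\eta D_{\mathrm{KL}}(\mathbb{Q}_\eta\Vert\mathbb{P})$, which comes from the equality case of the Gibbs formula. It gives
\[
\Psi_{\eta^\star}+\rho/\eta^\star=\E_{\mathbb{Q}_{\eta^\star}}[L],
\]
so the primal objective at the feasible point $\mathbb{Q}_{\eta^\star}$ equals the dual value. By weak duality this point is optimal, which proves \eqref{eq:kl_primal_value_main}.

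The main obstacle is justifying strong duality, together with the restriction to $\eta\in(0,\infty)$, at the boundary cases. If $\rho$ is large (for instance $\rho\ge\log(1/p_{\min})$) or all $L_k$ are equal, the infimum is approached only as $\eta\to\infty$ or $\eta\to0$. I would handle these cases by noting that the first part of the theorem only claims an infimum, and that the Slater argument covers every $\rho>0$. Attainment is assumed exactly where it is needed in the second part. As a check on the limiting cases, I would use $\lim_{\eta\to\infty}\Psi_\eta=\max_k L_k$ and $\lim_{\eta\to0}\Psi_\eta=\E_{\mathbb{P}}[L]$.
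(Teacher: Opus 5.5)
Your proposal is correct and follows the paper's route: Slater at $\mathbb{Q}=\mathbb{P}_{\mathcal S}$, Lagrangian dualization of the KL constraint with multiplier $1/\eta$, the Gibbs variational principle producing $\Psi_\eta$ and $\mathbb{Q}_\eta$, and the zero multiplier recovered as $\eta\to\infty$. Your computation $\frac{d}{d\eta}\{\Psi_\eta+\rho/\eta\}=\eta^{-2}\bigl(D_{\mathrm{KL}}(\mathbb{Q}_\eta\Vert\mathbb{P}_{\mathcal S})-\rho\bigr)$ together with the weak-duality certificate spells out the KKT step that the paper only cites. The one case you leave out is $\rho=0$, where Slater fails and which the paper treats separately; it follows at once from your own formula, because $\Psi_\eta$ is nondecreasing in $\eta$ and $\Psi_\eta\to\E_{\mathbb{P}_{\mathcal S}}[L_{i,s}(\theta)]$ as $\eta\downarrow 0$.
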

\begin{proof}
See Appendix~\ref{app:kl_drto_proofs}.
\end{proof}

Theorem~\ref{thm:kl_drto_main} reduces the KL-ball adversary in \eqref{eq:generic_span_dro} to a scalar minimization over $\eta>0$ and yields an explicit worst-case reweighting over spans. Algorithm~\ref{alg:kl_drto_true} therefore implements KL-DRTO by computing $\eta^\star$ and $\mathbb{Q}^\star$ from the minibatch span losses $\{L_{i,s}(\theta)\}$ and running RTO with the weighted loss $L_{\mathrm{KL}}(\theta)=\sum_{(i,s)}\mathbb{Q}^\star(i,s)\,L_{i,s}(\theta)$.

\subsection{DRTO on a $\chi^2$-Divergence Ambiguity Set}
\label{subsec:chi2_drto}

We now instantiate \eqref{eq:generic_span_dro} with the Pearson $\chi^2$-divergence~\cite{namkoong2017variance}
\begin{equation}
\Omega^{\chi^2}_\rho(\mathbb{P}_{\mathcal S})
=
\left\{\mathbb{Q}:\ \E_{(i,s)\sim\mathbb{P}_{\mathcal S}}\!\left[\left(\frac{\mathbb{Q}(i,s)}{\mathbb{P}_{\mathcal S}(i,s)}-1\right)^2\right]\le \rho\right\},
\label{eq:chi2_ball}
\end{equation}
and the corresponding $\chi^2$ span-robust value is
\begin{equation}
R_{\chi^2}(\theta;\rho)
:=
\sup_{\mathbb{Q}\in\Omega^{\chi^2}_\rho(\mathbb{P}_{\mathcal S})}\E_{(i,s)\sim\mathbb{Q}}\!\left[L_{i,s}(\theta)\right].
\label{eq:chi2_robust_value}
\end{equation}

A tractable relaxation depends only on the span mean and standard deviation under $\mathbb{P}_{\mathcal S}$,
\[
\bar L(\theta):=\E_{(i,s)\sim\mathbb{P}_{\mathcal S}}[L_{i,s}(\theta)],
\qquad
\sigma(\theta):=\sqrt{\Var_{(i,s)\sim\mathbb{P}_{\mathcal S}}(L_{i,s}(\theta))}.
\]
We thus have the following result.

\begin{theorem}[Relaxation for $\chi^2$-DRTO]
\label{thm:chi2_drto_main}
Fix any minibatch $\mathcal B$ with span set $\mathcal S_{\mathcal B}$ and nominal distribution $\mathbb{P}_{\mathcal S}$, and assume $L_{i,s}(\theta)\in\mathbb{R}$ for all $(i,s)\in\mathcal S_{\mathcal B}$. Then for any $\theta$,
\begin{equation}
R_{\chi^2}(\theta;\rho)\;\le\;\bar L(\theta)+\sigma(\theta)\sqrt{\rho}.
\label{eq:chi2_bound_main}
\end{equation}
Moreover, if $\sigma(\theta)>0$ and the nonnegativity condition
\begin{equation}
1+\sqrt{\rho}\,\frac{L_{i,s}(\theta)-\bar L(\theta)}{\sigma(\theta)}\ge 0,\qquad \forall (i,s)\in\mathcal S_{\mathcal B},
\label{eq:chi2_tight_cond_main}
\end{equation}
holds, then the bound \eqref{eq:chi2_bound_main} is tight.
\end{theorem}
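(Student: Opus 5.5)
The plan is to reparametrize the adversary by its likelihood ratio and then apply Cauchy--Schwarz. For any $\mathbb{Q}$ in $\Omega^{\chi^2}_\rho(\mathbb{P}_{\mathcal S})$, set $w(i,s):=\mathbb{Q}(i,s)/\mathbb{P}_{\mathcal S}(i,s)$. This is well defined because $\mathbb{P}_{\mathcal S}(i,s)=1/(nS_i)>0$ on the finite set $\mathcal S_{\mathcal B}$, so every $\mathbb{Q}$ on $\mathcal S_{\mathcal B}$ is absolutely continuous. The ratio satisfies $w\ge 0$ and $\E_{\mathbb{P}_{\mathcal S}}[w]=1$, and the ball constraint reads $\E_{\mathbb{P}_{\mathcal S}}[(w-1)^2]\le\rho$. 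Writing $L=L_{i,s}(\theta)$, we get
\[
\E_{\mathbb{Q}}[L]=\E_{\mathbb{P}_{\mathcal S}}[wL]=\bar L(\theta)+\E_{\mathbb{P}_{\mathcal S}}\big[(w-1)(L-\bar L(\theta))\big],
\]
where the centering in the second term uses $\E_{\mathbb{P}_{\mathcal S}}[w-1]=0$. All quantities are finite because the $L_{i,s}$ are real and the support is finite.

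\textbf{Upper bound.} Apply Cauchy--Schwarz in $L^2(\mathbb{P}_{\mathcal S})$ to the cross term:
\[
\E_{\mathbb{P}_{\mathcal S}}\big[(w-1)(L-\bar L)\big]\le\sqrt{\E_{\mathbb{P}_{\mathcal S}}[(w-1)^2]}\,\sigma(\theta)\le\sqrt\rho\,\sigma(\theta).
\]
This gives $\E_{\mathbb{Q}}[L]\le\bar L+\sqrt\rho\,\sigma$ for every feasible $\mathbb{Q}$. Taking the supremum yields \eqref{eq:chi2_bound_main}. This step drops the constraint $w\ge0$, which is the sense in which the bound is a relaxation.

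\textbf{Tightness.} The Cauchy--Schwarz equality case suggests the candidate
\[
w^\star(i,s):=1+\sqrt\rho\,\frac{L_{i,s}(\theta)-\bar L(\theta)}{\sigma(\theta)},
\]
which is well defined since $\sigma(\theta)>0$. Set $\mathbb{Q}^\star:=w^\star\mathbb{P}_{\mathcal S}$. We then check feasibility and the attained value:
\begin{itemize}
\item $\E_{\mathbb{P}_{\mathcal S}}[w^\star]=1$, because $L-\bar L$ has zero mean under $\mathbb{P}_{\mathcal S}$.
\item $w^\star\ge0$ pointwise is exactly condition \eqref{eq:chi2_tight_cond_main}. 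Together with the previous item, $\mathbb{Q}^\star$ is a genuine probability distribution on $\mathcal S_{\mathcal B}$.
\item $\E_{\mathbb{P}_{\mathcal S}}[(w^\star-1)^2]=\rho\,\sigma^2/\sigma^2=\rho$, so $\mathbb{Q}^\star\in\Omega^{\chi^2}_\rho(\mathbb{P}_{\mathcal S})$.
\item $\E_{\mathbb{Q}^\star}[L]=\bar L+\frac{\sqrt\rho}{\sigma}\E_{\mathbb{P}_{\mathcal S}}[(L-\bar L)^2]=\bar L+\sqrt\rho\,\sigma$.
\end{itemize}
Hence the supremum in \eqref{eq:chi2_robust_value} attains the bound, and the bound is tight.

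The argument is elementary and I expect no real obstacle. The one point needing care is noting that the nonnegativity constraint is dropped in the upper bound and restored in the tight case only through \eqref{eq:chi2_tight_cond_main}; without that condition the optimizer $w^\star$ can go negative and the bound becomes strict. The degenerate case $\sigma=0$ needs no separate treatment for the inequality, since the bound there reads $R_{\chi^2}\le\bar L$, which holds because $L$ is constant.
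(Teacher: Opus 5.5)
Your proposal is correct and follows the paper's proof essentially step for step. Both use the likelihood-ratio reparametrization $w=\mathbb{Q}/\mathbb{P}_{\mathcal S}$, the mean-shift identity and Cauchy--Schwarz for the upper bound, and the equality-case candidate $w^\star=1+\sqrt{\rho}\,(L_{i,s}-\bar L)/\sigma$, whose pointwise nonnegativity is exactly condition \eqref{eq:chi2_tight_cond_main}.
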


\begin{proof}
    See Appendix~\ref{app:chi2_drto_proofs}.
\end{proof}

The relaxation in \eqref{eq:chi2_bound_main} yields the stabilized span-level surrogate
\begin{equation}
L_{\chi^2}(\theta)
:=
\bar L(\theta)+\sqrt{\rho}\,\sqrt{\Var_{\mathbb{P}_{\mathcal S}}(L_{i,s}(\theta))+\delta},
\label{eq:chi2_drto_obj}
\end{equation}
where $\delta>0$ is a small numerical stabilizer. As $\delta\to 0$, \eqref{eq:chi2_drto_obj} matches the bound in \eqref{eq:chi2_bound_main}, which is tight whenever \eqref{eq:chi2_tight_cond_main} holds. Algorithm~\ref{alg:chi2_drto} implements $\chi^2$-DRTO by computing \eqref{eq:chi2_drto_obj} from the span losses' sample mean and variance under $\mathbb{P}_{\mathcal S}$.


\section{Experimental Results}
\label{subsec:experiments}
In this section, we empirically evaluate our DRTO methods to assess their capability of improving robustness under distribution shift and maintaining strong performance on in-distribution data. 

\subsection{Robustness of DRTO}
\label{subsec:main_exp}
We evaluate KL-DRTO and $\chi^2$-DRTO against several widely used RLHF fine-tuning objectives on two complementary reasoning domains: mathematical reasoning with \texttt{Llama3-8B} and code reasoning with \texttt{Qwen3-8B}. For both pipelines, we use an ambiguity radius of $0.1$ and $0.01$ for $\chi^2$-DRTO and KL-DRTO respectively, with minimum span length 16, maximum span number 8, and cut probability threshold $p_{\mathrm{cut}}=0.05$. Additional implementation details are provided in Appendix~\ref{app:implementation}.

\paragraph{Math reasoning setup.}
All math methods share the \texttt{Llama3-8B} backbone and are fine-tuned on 10{,}000 samples from \texttt{OpenMathInstruct}~\citep{openmath}, a dataset spanning diverse math topics and solution formats. We report 5-shot flexible-match accuracy via \texttt{lm-evaluation-harness}~\citep{lm-eval}. The five math benchmarks jointly probe in-distribution generalization and a range of distribution shifts: \texttt{GSM8K} (grade-school arithmetic, nearest in-distribution), \texttt{GSM(CoT)} (same problems with a chain-of-thought trigger, prompt-format shift), \texttt{GSM(ES)} (Spanish translation, language shift), \texttt{MATH-500} (harder competition-style problems), and \texttt{GPQA} (graduate-level science, difficulty and topic shift).

\paragraph{Code reasoning setup.}
For code reasoning, all methods share the \texttt{Qwen3-8B} backbone and are fine-tuned on 5{,}000 prompts from \texttt{UltraInteract}~\citep{ultrainteract}, a multi-turn coding dataset with diverse programming problems and solution styles. We initialize from a DPO-tuned reference policy and use a smaller \texttt{Qwen3-1.7B} reward model. We evaluate greedy pass@1 on \texttt{HumanEval}~\citep{humaneval}, \texttt{MBPP}~\citep{mbpp}, and \texttt{LiveCodeBench}~\citep{livecodebench} via the \texttt{OpenCompass} harness~\citep{opencompass}, covering function-level synthesis, broader Python programming patterns, and contamination-resistant competitive problems beyond the training distribution.

\begin{table*}[t]
\centering
\caption{Accuracy of \texttt{Llama3-8B} across five benchmarks. Superscripts report absolute change (percentage points) compared to standard RTO. Best results are in \textbf{bold}.}
\label{tab:five-benchmarks}
\vspace{4pt}
\setlength{\tabcolsep}{6pt}
\renewcommand{\arraystretch}{1.2}
\begin{tabular}{lcccccc}
\toprule
\textbf{Models} & \multicolumn{5}{c}{\textbf{Accuracy}} \\
\cmidrule(lr){2-6}
& \textbf{GSM8K} & \textbf{GSM (CoT)} & \textbf{GSM (ES)} & \textbf{MATH-500} & \textbf{GPQA} \\
\midrule
RTO
& 73.2
& 74.4
& 66.0
& 18.0
& 26.3 \\

SFT
& 61.4\textsuperscript{\textcolor{blue}{-11.8}}
& 64.8\textsuperscript{\textcolor{blue}{-9.6}}
& 63.6\textsuperscript{\textcolor{blue}{-2.4}}
& 17.4\textsuperscript{\textcolor{blue}{-0.6}}
& 24.8\textsuperscript{\textcolor{blue}{-1.5}} \\

PPO
& 73.2\textsuperscript{\textcolor{blue}{0.0}}
& 75.2\textsuperscript{\textcolor{red}{+0.8}}
& 65.6\textsuperscript{\textcolor{blue}{-0.4}}
& 17.0\textsuperscript{\textcolor{blue}{-1.0}}
& 25.8\textsuperscript{\textcolor{blue}{-0.5}} \\

DPO
& 73.9\textsuperscript{\textcolor{red}{+0.7}}
& 76.4\textsuperscript{\textcolor{red}{+2.0}}
& 66.4\textsuperscript{\textcolor{red}{+0.4}}
& 20.1\textsuperscript{\textcolor{red}{+2.1}}
& 20.7\textsuperscript{\textcolor{blue}{-5.6}} \\

FlowRL
& 75.9\textsuperscript{\textcolor{red}{+2.7}}
& 76.2\textsuperscript{\textcolor{red}{+1.8}}
& 66.0\textsuperscript{\textcolor{blue}{0.0}}
& 19.2\textsuperscript{\textcolor{red}{+1.2}}
& \textbf{28.8}\textsuperscript{\textcolor{red}{+2.5}} \\

DAPO
& 78.0\textsuperscript{\textcolor{red}{+4.8}}
& 76.8\textsuperscript{\textcolor{red}{+2.4}}
& 64.0\textsuperscript{\textcolor{blue}{-2.0}}
& 19.4\textsuperscript{\textcolor{red}{+1.4}}
& 25.3\textsuperscript{\textcolor{blue}{-1.0}} \\

\midrule\midrule
$\chi^2$-DRTO
& \textbf{78.7}\textsuperscript{\textcolor{red}{+5.5}}
& \textbf{79.5}\textsuperscript{\textcolor{red}{+5.1}}
& 66.8\textsuperscript{\textcolor{red}{+0.8}}
& \textbf{22.4}\textsuperscript{\textcolor{red}{+4.4}}
& 27.3\textsuperscript{\textcolor{red}{+1.0}} \\

KL-DRTO
& 77.3\textsuperscript{\textcolor{red}{+4.1}}
& 77.4\textsuperscript{\textcolor{red}{+3.0}}
& \textbf{67.6}\textsuperscript{\textcolor{red}{+1.6}}
& 18.8\textsuperscript{\textcolor{red}{+0.8}}
& 26.3\textsuperscript{\textcolor{blue}{0.0}} \\

\bottomrule
\end{tabular}
\vspace{2pt}
\end{table*}

\begin{figure*}[t]
\centering
\begin{minipage}[t]{0.58\linewidth}
\centering
\captionof{table}{Coding accuracy of \texttt{Qwen3-8B} (greedy pass@1). Best results in \textbf{bold}.}
\label{tab:coding-qwen}
\vspace{4pt}
\setlength{\tabcolsep}{6pt}
\renewcommand{\arraystretch}{1.1}
\begin{tabular}{lccc}
\toprule
\textbf{Models} & \textbf{HumanEval} & \textbf{MBPP} & \textbf{LiveCodeBench} \\
\midrule
RTO
& 87.80
& 60.60
& 37.84 \\

SFT
& 84.76\textsuperscript{\textcolor{blue}{-3.04}}
& 69.40\textsuperscript{\textcolor{red}{+8.80}}
& 40.54\textsuperscript{\textcolor{red}{+2.70}} \\

PPO
& 84.15\textsuperscript{\textcolor{blue}{-3.65}}
& 68.60\textsuperscript{\textcolor{red}{+8.00}}
& 37.84\textsuperscript{\textcolor{blue}{0.00}} \\

DPO
& 78.05\textsuperscript{\textcolor{blue}{-9.75}}
& \textbf{70.60}\textsuperscript{\textcolor{red}{+10.00}}
& 24.13\textsuperscript{\textcolor{blue}{-13.71}} \\

FlowRL
& 84.76\textsuperscript{\textcolor{blue}{-3.04}}
& 59.60\textsuperscript{\textcolor{blue}{-1.00}}
& 39.96\textsuperscript{\textcolor{red}{+2.12}} \\

DAPO
& 85.98\textsuperscript{\textcolor{blue}{-1.82}}
& 62.60\textsuperscript{\textcolor{red}{+2.00}}
& \textbf{40.93}\textsuperscript{\textcolor{red}{+3.09}} \\

\midrule\midrule
$\chi^2$-DRTO
& \textbf{88.41}\textsuperscript{\textcolor{red}{+0.61}}
& 69.00\textsuperscript{\textcolor{red}{+8.40}}
& 39.96\textsuperscript{\textcolor{red}{+2.12}} \\

KL-DRTO
& 85.37\textsuperscript{\textcolor{blue}{-2.43}}
& 70.20\textsuperscript{\textcolor{red}{+9.60}}
& 40.54\textsuperscript{\textcolor{red}{+2.70}} \\

\bottomrule
\end{tabular}
\end{minipage}%
\hfill
\begin{minipage}[t]{0.36\linewidth}
\centering
\captionof{table}{End-to-end training cost on the math pipeline.}
\label{tab:runtime}
\vspace{4pt}
\setlength{\tabcolsep}{6pt}
\renewcommand{\arraystretch}{1.1}
\begin{tabular}{lcc}
\toprule
\textbf{Method} & \textbf{Total time} & \textbf{vs.\ RTO} \\
\midrule
RTO              & 10\,h\,00\,m       & $1.00\times$ \\
$\chi^2$-DRTO    & 11\,h\,36\,m       & $1.16\times$ \\
KL-DRTO          & 11\,h\,41\,m       & $1.17\times$ \\
\midrule
DPO              & 2\,h\,07\,m       & $0.21\times$ \\
PPO              & 6\,h\,58\,m       & $0.69\times$ \\
DAPO             & 35\,h\,55\,m       & $3.59\times$ \\
FlowRL           & 48\,h\,38\,m       & $4.86\times$ \\
\bottomrule
\end{tabular}
\end{minipage}
\end{figure*}
We compare DRTO with various baseline models, including (i) RTO~\citep{rto}, the token-level RLHF baseline that our methods extend; (ii) supervised fine-tuning (SFT)~\citep{ouyang2022training} on the same dataset; (iii) DPO~\citep{rafailov2023direct}, a classic preference-based framework; (iv) PPO~\citep{ppo}, a classic policy-gradient RLHF baseline; (v) FlowRL~\citep{flowrl}, a recent flow-matching policy optimization objective; and (vi) DAPO~\citep{dapo}, a recent decoupled-clipping with dynamic-sampling RL objective. Our main methods are reported at the bottom of Table~\ref{tab:five-benchmarks} and ~\ref{tab:coding-qwen}.

\paragraph{DRTO improves robustness across both math and code reasoning.}
Tables~\ref{tab:five-benchmarks} and~\ref{tab:coding-qwen} report results. The five math splits cover both controlled distribution shifts (\texttt{GSM(CoT)} for prompt format, \texttt{GSM(ES)} for language) and broader OOD transfer (\texttt{MATH-500}, \texttt{GPQA}); the three coding benchmarks target function-level synthesis, broader Python patterns, and contamination-resistant competitive problems. Both DRTO methods are the only approaches that do not fall below RTO on any math benchmark, with the largest gains on \texttt{MATH-500} and \texttt{GSM8K}, while every non-robust baseline regresses on at least one math benchmark, most notably DPO on \texttt{GPQA} and DAPO on the language-shifted \texttt{GSM(ES)}. On code reasoning, $\chi^2$-DRTO is the only method that does not regress relative to RTO on any of the three benchmarks; KL-DRTO trades off a drop on \texttt{HumanEval} for sizable gains on \texttt{MBPP} ($+9.6$) and \texttt{LiveCodeBench} ($+2.7$), and is within $0.4$ pp of the best \texttt{LiveCodeBench} score (DAPO at $40.93$). Two patterns in Table~\ref{tab:coding-qwen} deserve a brief comment: DPO falls well below its SFT initialization on both \texttt{HumanEval} ($-6.71$) and \texttt{LiveCodeBench} ($-16.41$), consistent with known offline over-optimization of preference-based objectives on coding data~\citep{chipo}, and SFT itself outperforms RTO on \texttt{MBPP} and \texttt{LiveCodeBench}, indicating that the RTO baseline on this code pipeline is not uniformly strong even before robustification, which makes DRTO's consistent recovery of these splits a non-trivial improvement rather than a result of a weak reference. Each configuration is reported from a single training seed; we therefore frame these numbers as evidence of consistent improvement over RTO rather than as state-of-the-art on every benchmark.

\paragraph{Runtime comparison.}
Table~\ref{tab:runtime} reports end-to-end wall-clock training time on identical hardware. Both DRTO variants run at essentially the same cost as standard RTO, whereas FlowRL and DAPO are several times more expensive. DRTO therefore delivers its robustness gains without meaningful additional training overhead. A representative qualitative example, in which RTO drops a step that both DRTO variants recover, is provided in Appendix~\ref{app:qualitative_examples}.

\subsection{Performance Analysis}
\label{subsec:ablations}
In order to better understand the outperformance of DRTO, we conduct the following ablation studies. More experiments can be found in Appendix~\ref{app:training_dynamics}.

\paragraph{The vital role of token-wise reward: DRTO vs DRPPO.}
Despite the large gains from DRTO, a natural question is whether the span-level robust objective alone is doing all the work, or whether the token-wise reward shaping inherited from RTO still plays an essential role. To answer this, we conduct the following ablation. Since our robust objectives act purely at the span level through the minibatch losses $\{L_i(\theta)\}$, both the KL and $\chi^2$ variants can be plugged into vanilla PPO with minimal changes: we simply replace the trajectory-level reward in PPO with the same span-level robust reweighting used by DRTO, yielding KL-DRPPO and $\chi^2$-DRPPO. Both models are trained under the same configuration as Section~\ref{subsec:main_exp}, and Table~\ref{tab:drto_ablation} compares them against the corresponding KL-DRTO and $\chi^2$-DRTO models.

Across all three benchmarks, DRTO is consistently stronger than its DRPPO counterpart, with the gap widening on the harder \texttt{MATH-500} split where the robust reweighting matters most. This indicates that the gains do not come from span-level DRO alone. Rather, the token-wise reward in RTO supplies a dense, fine-grained learning signal that PPO's sparse trajectory-level reward cannot replicate, while the span-level DRO term stabilizes the update by upweighting loss-sensitive spans within each minibatch so that high-loss segments receive proportionally more gradient signal. When the token-wise shaping is removed in DRPPO, the improvement from robustness alone is markedly smaller and less consistent under shift; when both ingredients are combined in DRTO, robustness and accuracy improve together. In short, span-level DRO improves the reliability of the update, but the intrinsic token-wise reward in RTO still supplies an indispensable part of the learning signal, and DRTO benefits from both.

\paragraph{Span-level vs.\ trajectory-level DRTO.}
A second natural question is whether the span-level segmentation in DRTO is essential, or whether the same robustness could be obtained by treating each full response as a single span and applying the DRO reweighting at the trajectory level. To isolate the contribution of span segmentation, we train trajectory-level variants of both robust objectives, denoted KL-DRTO (w/o span) and $\chi^2$-DRTO (w/o span), under the same setup as Section~\ref{subsec:main_exp}. These variants retain the token-wise RTO reward and the same KL or $\chi^2$ ambiguity set, but compute the inner maximization over whole trajectories rather than over the variable-length spans produced by our segmentation rule.

Table~\ref{tab:drto_ablation} shows that span-level DRTO is uniformly stronger than its trajectory-level counterpart on the harder, more shift-sensitive benchmarks (\texttt{MATH-500} and \texttt{GSM8K}), while remaining comparable on \texttt{GPQA}. The reason is structural: long chain-of-thought responses contain a mixture of high-confidence routine steps and a few low-confidence pivot tokens at which the model commits to a reasoning branch. A trajectory-level DRO weight averages across both kinds of tokens and therefore cannot localize the few segments that genuinely drive the loss, so the worst-case inner maximization is diluted. Span-level cuts at low-confidence positions instead concentrate the robust reweighting on the segments where errors actually accumulate, letting the DRO term penalize precisely the parts of the response that are most sensitive to distribution shift. Span segmentation is therefore not a cosmetic detail but the mechanism that makes the robust reweighting actionable on long, structurally heterogeneous reasoning trajectories.

\begin{table*}[t]
\centering
\scriptsize
\setlength{\tabcolsep}{3pt}
\renewcommand{\arraystretch}{0.95}

\begin{minipage}[t]{0.65\linewidth}
\vspace{0pt}\centering
\caption{Comparison of DRPPO, trajectory-level DRTO (no span), and span-level DRTO. Best results in each column are in \textbf{bold}.}
\label{tab:drto_ablation}
\resizebox{\linewidth}{!}{%
\begin{tabular}[t]{lccc}
\toprule
\textbf{Model} & \textbf{GSM8K} & \textbf{MATH-500} & \textbf{GPQA} \\
\midrule
$\chi^2$-DRPPO & 77.4 & 14.2 & 26.8  \\
KL-DRPPO       & 77.0 & 15.8 & 25.8  \\
\midrule
$\chi^2$-DRTO (w/o span)  & 77.7 & 18.6 & 25.8 \\
KL-DRTO (w/o span)        & 76.8 & 18.6 & 26.3 \\
\midrule
$\chi^2$-DRTO    & \textbf{78.7} & \textbf{22.4} & \textbf{27.3} \\
KL-DRTO          & 77.3 & 18.8 & 26.3 \\
\bottomrule
\end{tabular}}
\end{minipage}
\hfill
\begin{minipage}[t]{0.32\linewidth}
\vspace{0pt}\centering
\caption{In-distribution evaluation on GSM8K.}
\label{tab:in_dist}
\resizebox{\linewidth}{!}{%
\begin{tabular}[t]{lc}
\toprule
\textbf{Method} & \textbf{GSM8K}\\
\midrule
SFT & 63.3  \\
PPO & 78.1  \\
DPO & 76.8  \\
RTO & \textbf{80.2}   \\
\textsc{KL-DRTO} & 80.1  \\
\textsc{$\chi^2$-DRTO} & 79.8 \\
\bottomrule
\end{tabular}}
\end{minipage}

\end{table*}

\paragraph{In-distribution case.}
To verify that robust objectives do not distort behavior on the training distribution, we additionally train on the \texttt{GSM8K} training split and evaluate on its test split. Table~\ref{tab:in_dist} shows that both KL-DRTO and $\chi^2$-DRTO closely track RTO and remain ahead of PPO and DPO, confirming that DRTO's robustness gains do not come at the expense of in-distribution accuracy.

\section{Conclusion}
\label{sec:conclusion}

We introduced \emph{Distributionally Robust Token Optimization} (DRTO), a framework that incorporates distributionally robust optimization into token-level RLHF by constructing divergence-based ambiguity sets around the empirical span distribution and deriving tight dual surrogates, yielding more stable training dynamics. Our analysis further shows that $\chi^2$-DRTO and KL-DRTO are two concrete instantiations of the same robust optimization principle, offering a clean theoretical interpretation of adaptive-penalty mechanisms commonly used in practice. As a drop-in replacement in standard RLHF pipelines, DRTO improves over standard RTO on most out-of-distribution math and code evaluations with no increase in model size or architectural change. Limitations include evaluation on only two 8B backbones (\texttt{Llama3-8B}, \texttt{Qwen3-8B}) and single-turn settings, the lack of formal guarantees against adversarial or worst-case safety failures, and the rollout cost shared with other on-policy RLHF methods. Meanwhile, promising directions consist of extensions to multi-turn trajectories, larger models, and integration with preference-based or multi-reward RLHF.

\bibliography{ref}
\bibliographystyle{plainnat}

\newpage
\appendix

\section{Appendix of Proofs}

\subsection{Proof of Theorem~\ref{thm:kl_drto_main}}
\label{app:kl_drto_proofs}

We prove the exact one-dimensional dual representation of the span-level KL-robust value and the induced exponential-tilt form of the worst-case span reweighting when the dual infimum is attained at a finite temperature.

Let $m=(i,s)$ index a span in $\mathcal S_{\mathcal B}$, write $z_m=L_{i,s}(\theta)$, and let $\mathbb{P}_{\mathcal S}(m)$ be the response-balanced nominal probability. If $\rho=0$, then the constraint $D_{\mathrm{KL}}(\mathbb{Q}\Vert \mathbb{P}_{\mathcal S})\le 0$ forces $\mathbb{Q}=\mathbb{P}_{\mathcal S}$, so $R_{\mathrm{KL}}(\theta;0)=\sum_m \mathbb{P}_{\mathcal S}(m)\, z_m$. Moreover, $\Psi_\eta(\theta)\to \sum_m \mathbb{P}_{\mathcal S}(m)\,z_m$ as $\eta\downarrow 0$, hence \eqref{eq:kl_exact_dual_main} holds when $\rho=0$. We now assume $\rho>0$.

The inner maximization in \eqref{eq:generic_span_dro} can be written as
\[
\max_{\mathbb{Q}\in\Delta(\mathcal S_{\mathcal B})}\ \sum_m \mathbb{Q}(m) z_m
\quad\text{where}\quad
D_{\mathrm{KL}}(\mathbb{Q}\Vert \mathbb{P}_{\mathcal S})\le \rho.
\]
The point $\mathbb{Q}=\mathbb{P}_{\mathcal S}$ is strictly feasible whenever $\rho>0$, so Slater's condition~\cite{BoydVandenberghe04} holds and strong duality applies. Let $\nu\ge0$ be the Lagrange multiplier for the KL constraint. Then
\begin{align*}
R_{\mathrm{KL}}(\theta;\rho)
&=
\inf_{\nu\ge0}\left\{
\nu\rho+
\sup_{\mathbb{Q}\in\Delta(\mathcal S_{\mathcal B})}
\left(\sum_m \mathbb{Q}(m) z_m-\nu D_{\mathrm{KL}}(\mathbb{Q}\Vert \mathbb{P}_{\mathcal S})\right)
\right\}.
\end{align*}
For $\nu>0$, set $\eta=1/\nu$. By the Gibbs variational principle~\citep{vanHandel16},
\[
\sup_{\mathbb{Q}\in\Delta(\mathcal S_{\mathcal B})}
\left(\sum_m \mathbb{Q}(m) z_m-\frac{1}{\eta}D_{\mathrm{KL}}(\mathbb{Q}\Vert \mathbb{P}_{\mathcal S})\right)
=
\frac{1}{\eta}\log\sum_m \mathbb{P}_{\mathcal S}(m)\, e^{\eta z_m}
=
\Psi_\eta(\theta),
\]
and the maximizer is
\[
\mathbb{Q}_\eta(m)=\frac{\mathbb{P}_{\mathcal S}(m)\, e^{\eta z_m}}{\sum_{m'}\mathbb{P}_{\mathcal S}(m')\, e^{\eta z_{m'}}},
\]
which is exactly \eqref{eq:softmax_Q_main}. Substituting $\nu=1/\eta$ gives \eqref{eq:kl_exact_dual_main}; the case $\nu=0$ is recovered as the $\eta\to\infty$ limit. If the infimum is attained at a finite $\eta^\star$, strong duality and the KKT conditions imply that the corresponding primal optimizer is $\mathbb{Q}_{\eta^\star}$, which gives \eqref{eq:kl_primal_value_main}.

\subsection{Proof of Theorem~\ref{thm:chi2_drto_main}}
\label{app:chi2_drto_proofs}

We first show the following bound, which also characterizes an optimizer when the nonnegativity condition holds.

\begin{theorem}[$\chi^2$ span-robust bound]
\label{thm:chi2_mean_std_app}
Fix a minibatch $\mathcal B$ and its span set $\mathcal S_{\mathcal B}$. Let
\[
\bar L(\theta)=\E_{(i,s)\sim\mathbb{P}_{\mathcal S}}[L_{i,s}(\theta)],
\qquad
\sigma(\theta)=\sqrt{\Var_{(i,s)\sim\mathbb{P}_{\mathcal S}}(L_{i,s}(\theta))}.
\]
Then for any $\rho>0$,
\begin{equation}
R_{\chi^2}(\theta;\rho)
\le
\bar L(\theta)+\sigma(\theta)\sqrt{\rho}.
\label{eq:chi2_upper_bound_app}
\end{equation}
Moreover, given $z_{i,s}(\theta)=L_{i,s}(\theta)-\bar L(\theta)$, if
\begin{equation}
1+\sqrt{\rho}\,\frac{z_{i,s}(\theta)}{\sigma(\theta)}\ge 0\quad \text{for all }(i,s),
\label{eq:chi2_nonneg_condition_app}
\end{equation}
then the bound in \eqref{eq:chi2_upper_bound_app} is tight and the supremum in \eqref{eq:chi2_robust_value} equals $\bar L(\theta)+\sigma(\theta)\sqrt{\rho}$. In this case, an optimal adversary is
\begin{equation}
\mathbb{Q}^\star(i,s)
=
\mathbb{P}_{\mathcal S}(i,s)\left(1+\sqrt{\rho}\,\frac{z_{i,s}(\theta)}{\sigma(\theta)}\right).
\label{eq:chi2_opt_weights_app}
\end{equation}
\end{theorem}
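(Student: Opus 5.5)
Write any feasible $\mathbb{Q}$ in likelihood-ratio form $w(i,s):=\mathbb{Q}(i,s)/\mathbb{P}_{\mathcal S}(i,s)$. Because $\mathbb{P}_{\mathcal S}$ has full support on the finite set $\mathcal S_{\mathcal B}$, this is well defined. The conditions $\mathbb{Q}\in\Delta(\mathcal S_{\mathcal B})$ and $\mathbb{Q}\in\Omega^{\chi^2}_\rho(\mathbb{P}_{\mathcal S})$ then read
\[
w\ge 0,\qquad \E_{\mathbb{P}_{\mathcal S}}[w]=1,\qquad \E_{\mathbb{P}_{\mathcal S}}[(w-1)^2]\le\rho .
\]
Setting $u:=w-1$, we get $\E_{\mathbb{P}_{\mathcal S}}[u]=0$. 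With $z_{i,s}=L_{i,s}(\theta)-\bar L(\theta)$, which also has $\mathbb{P}_{\mathcal S}$-mean zero, the objective becomes
\[
\E_{\mathbb{Q}}[L]=\E_{\mathbb{P}_{\mathcal S}}[(1+u)L]=\bar L(\theta)+\E_{\mathbb{P}_{\mathcal S}}[uL]=\bar L(\theta)+\E_{\mathbb{P}_{\mathcal S}}[u\,z].
\]
The last equality holds because $\E_{\mathbb{P}_{\mathcal S}}[u]=0$ kills the constant $\bar L$.

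For the upper bound \eqref{eq:chi2_upper_bound_app}, apply Cauchy--Schwarz in $L^2(\mathbb{P}_{\mathcal S})$:
\[
\E_{\mathbb{P}_{\mathcal S}}[u z]\le \sqrt{\E_{\mathbb{P}_{\mathcal S}}[u^2]}\sqrt{\E_{\mathbb{P}_{\mathcal S}}[z^2]}\le\sqrt{\rho}\,\sigma(\theta).
\]
This holds for every feasible $\mathbb{Q}$, so taking the supremum gives $R_{\chi^2}(\theta;\rho)\le\bar L(\theta)+\sigma(\theta)\sqrt\rho$. The nonnegativity constraint $w\ge0$ was dropped here, which is why the result is a relaxation. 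If $\sigma(\theta)=0$, then $z\equiv0$ and the bound reads $R_{\chi^2}\le\bar L$. This is in fact an equality, since every $\mathbb{Q}$ gives the value $\bar L$.

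For tightness, assume $\sigma(\theta)>0$ and \eqref{eq:chi2_nonneg_condition_app}. Cauchy--Schwarz is an equality exactly when $u$ is a nonnegative multiple of $z$ with $\E[u^2]=\rho$. This forces the choice $u=\sqrt\rho\,z/\sigma(\theta)$, i.e.\ the $\mathbb{Q}^\star$ of \eqref{eq:chi2_opt_weights_app}. I would then verify feasibility directly:
\begin{itemize}
\item \eqref{eq:chi2_nonneg_condition_app} says exactly that $\mathbb{Q}^\star\ge0$;
\item $\sum\mathbb{Q}^\star=1+\sqrt\rho\,\E_{\mathbb{P}_{\mathcal S}}[z]/\sigma=1$;
\item $\E_{\mathbb{P}_{\mathcal S}}[(w-1)^2]=\rho\,\E[z^2]/\sigma^2=\rho$.
\end{itemize}
The objective at this point is $\bar L+\sqrt\rho\,\E[z^2]/\sigma=\bar L+\sqrt\rho\,\sigma$. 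So the supremum is attained and equals the bound.

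Theorem~\ref{thm:chi2_drto_main} then follows immediately, since its two claims are exactly these two parts. The argument has no real obstacle. The only points needing care are the reduction to the centered variable, which uses $\E[u]=0$ to replace $L$ by $z$, and recording that the dropped constraint $w\ge 0$ is precisely where tightness can fail; the nonnegativity hypothesis exists to rule that out.
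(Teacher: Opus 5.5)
Your proposal is correct and follows the paper's proof essentially step for step. The shared steps are the likelihood-ratio reparametrization $w=\mathbb{Q}/\mathbb{P}_{\mathcal S}$, the mean-shift identity from $\E_{\mathbb{P}_{\mathcal S}}[w]=1$, Cauchy--Schwarz for the upper bound, and explicit verification that $w^\star=1+\sqrt{\rho}\,z/\sigma(\theta)$ is feasible and attains it, with $\sigma(\theta)=0$ treated separately. Your explicit remark that $\mathbb{P}_{\mathcal S}$ has full support makes that degenerate case slightly more rigorous than the paper's ``immediate.''
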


\begin{proof}
Let $w_{i,s}=\mathbb{Q}(i,s)/\mathbb{P}_{\mathcal S}(i,s)$. Then $\E_{\mathbb{P}_{\mathcal S}}[w]=1$ and $\E_{\mathbb{P}_{\mathcal S}}[(w-1)^2]\le\rho$. Adding and subtracting $\bar L(\theta)$ inside the inner expectation and using $\E_{\mathbb{P}_{\mathcal S}}[w]=1$ gives the mean-shift identity
\[
\sum_{(i,s)}\mathbb{Q}(i,s)\, L_{i,s}(\theta)
=
\E_{\mathbb{P}_{\mathcal S}}[w(L_{i,s}(\theta)-\bar L(\theta))]+\bar L(\theta)
=
\bar L(\theta)+\E_{\mathbb{P}_{\mathcal S}}\big[(w-1)(L_{i,s}(\theta)-\bar L(\theta))\big],
\]
where the second equality uses the fact that $\E_{\mathbb{P}_{\mathcal S}}[L_{i,s}(\theta)-\bar L(\theta)]=0$, so the contribution of the centered random variable under the constant weight $1$ vanishes.
By Cauchy--Schwarz,
\[
\E_{\mathbb{P}_{\mathcal S}}\big[(w-1)(L_{i,s}(\theta)-\bar L(\theta))\big]
\le
\sqrt{\E_{\mathbb{P}_{\mathcal S}}[(w-1)^2]}
\sqrt{\E_{\mathbb{P}_{\mathcal S}}[(L_{i,s}(\theta)-\bar L(\theta))^2]}
\le
\sqrt{\rho}\,\sigma(\theta),
\]
which proves \eqref{eq:chi2_upper_bound_app}.

If $\sigma(\theta)=0$, then all span losses are equal under $\mathbb{P}_{\mathcal S}$, and the theorem is immediate. Now assume $\sigma(\theta)>0$ and \eqref{eq:chi2_nonneg_condition_app} holds. Define
$w^\star_{i,s}=1+\sqrt{\rho}\,z_{i,s}(\theta)/\sigma(\theta)$ and $\mathbb{Q}^\star(i,s)=\mathbb{P}_{\mathcal S}(i,s)\, w^\star_{i,s}$. Then $\mathbb{Q}^\star$ is a valid distribution, $\E_{\mathbb{P}_{\mathcal S}}[w^\star]=1$, and
\[
\E_{\mathbb{P}_{\mathcal S}}[(w^\star-1)^2]
=
\E_{\mathbb{P}_{\mathcal S}}\!\left[\rho\frac{z_{i,s}(\theta)^2}{\sigma(\theta)^2}\right]
=
\rho,
\]
so $\mathbb{Q}^\star\in\Omega^{\chi^2}_\rho(\mathbb{P}_{\mathcal S})$. Finally,
\[
\sum_{(i,s)}\mathbb{Q}^\star(i,s)\, L_{i,s}(\theta)
=
\bar L(\theta)+\sqrt{\rho}\frac{\E_{\mathbb{P}_{\mathcal S}}[z_{i,s}(\theta)^2]}{\sigma(\theta)}
=
\bar L(\theta)+\sigma(\theta)\sqrt{\rho},
\]
which matches the upper bound and proves tightness.
\end{proof}

Theorem~\ref{thm:chi2_drto_main} follows by applying Theorem~\ref{thm:chi2_mean_std_app} to $R_{\chi^2}(\theta;\rho)$.

\section{Training Dynamics}
\label{app:training_dynamics}

This appendix complements the main results with optimization-side diagnostics. Figure~\ref{fig:training_dynamics} reports training-time return, critic loss, and PTX loss across methods, providing further evidence that DRTO's robustness gains do not come at the cost of less stable optimization.

\begin{figure*}[t]
    \centering
    \begin{subfigure}[t]{0.32\linewidth}
        \centering
        \includegraphics[width=\linewidth]{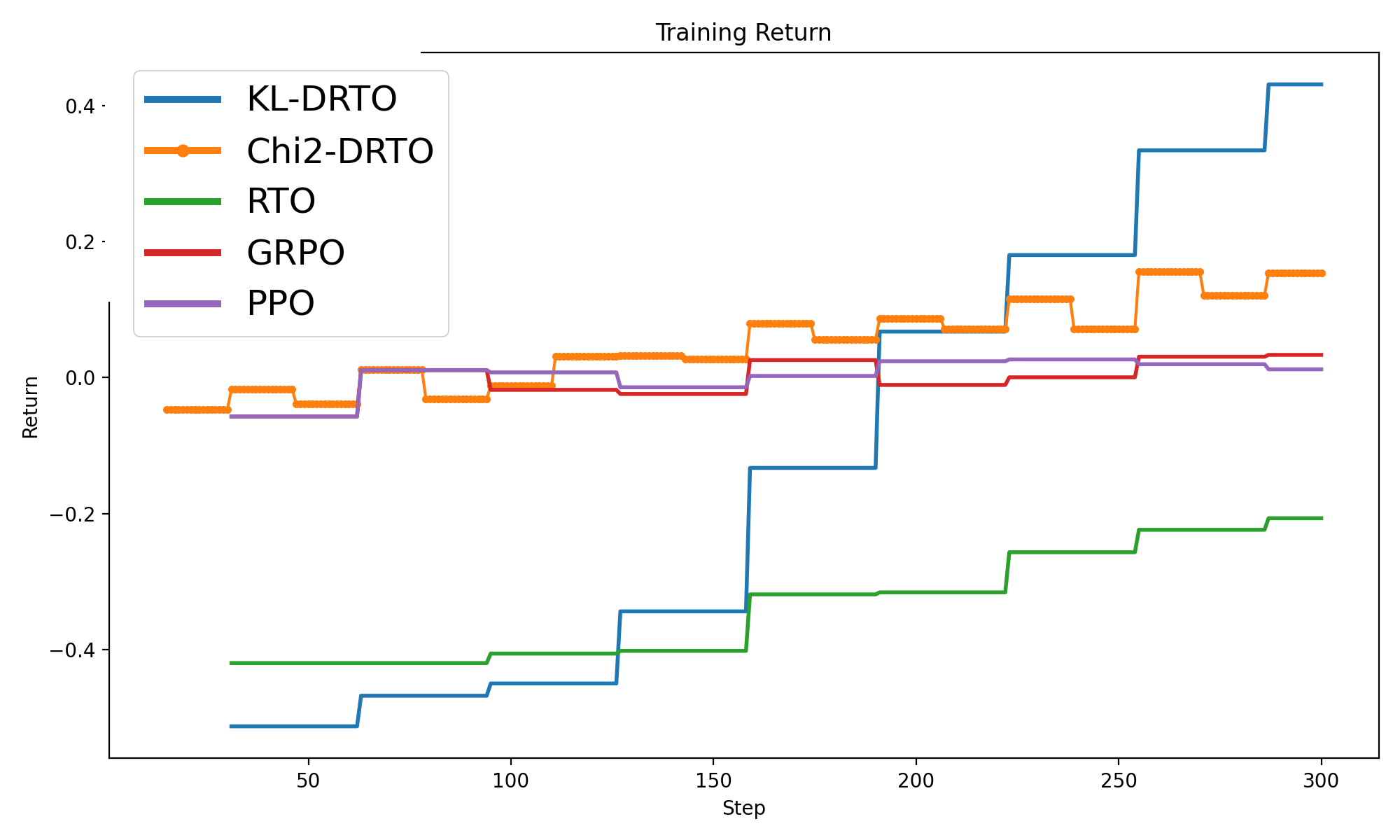}
        \caption{Training return comparison.}
        \label{fig:train_return}
    \end{subfigure}
    \hfill
    \begin{subfigure}[t]{0.32\linewidth}
        \centering
        \includegraphics[width=\linewidth]{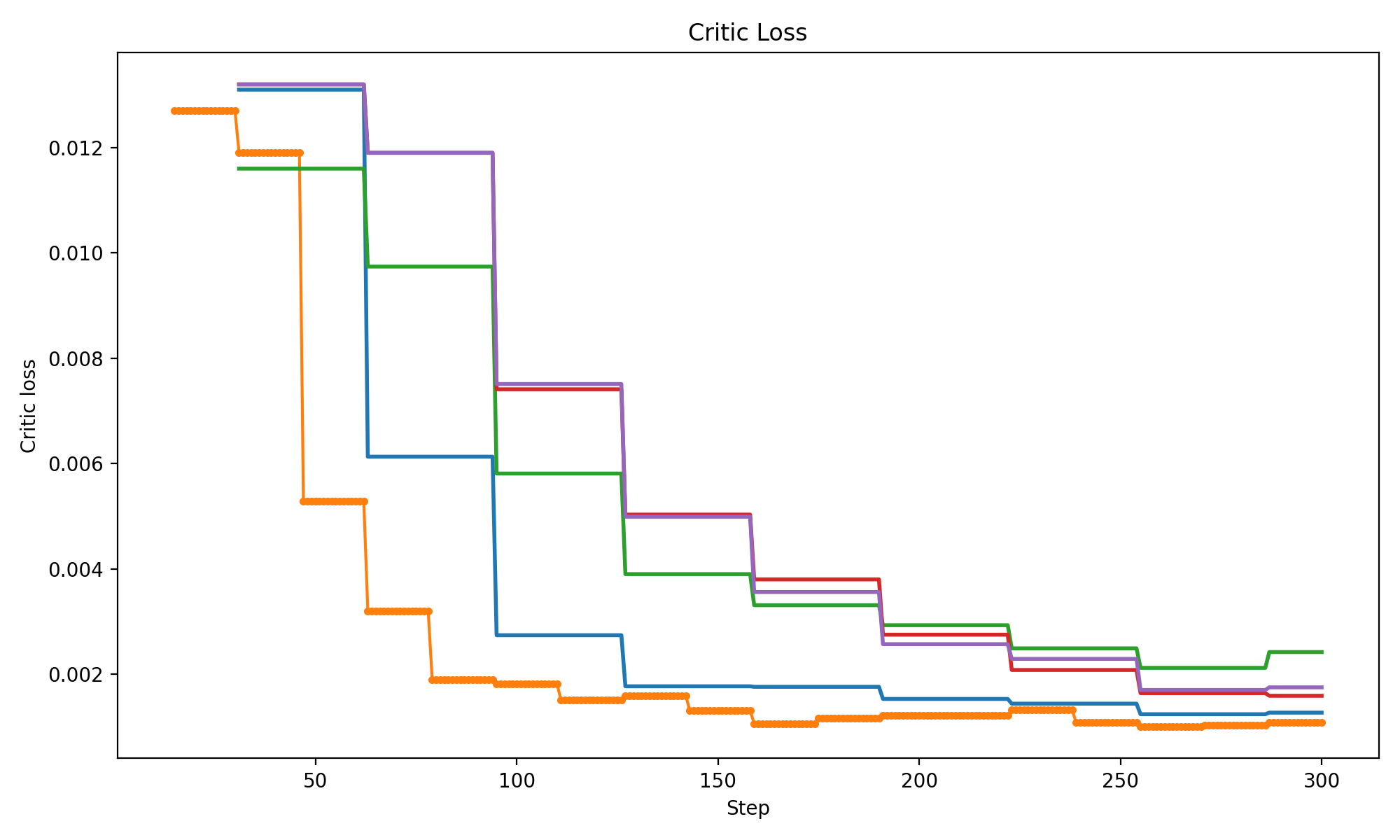}
        \caption{Critic loss during training.}
        \label{fig:critic_loss}
    \end{subfigure}
    \hfill
    \begin{subfigure}[t]{0.32\linewidth}
        \centering
        \includegraphics[width=\linewidth]{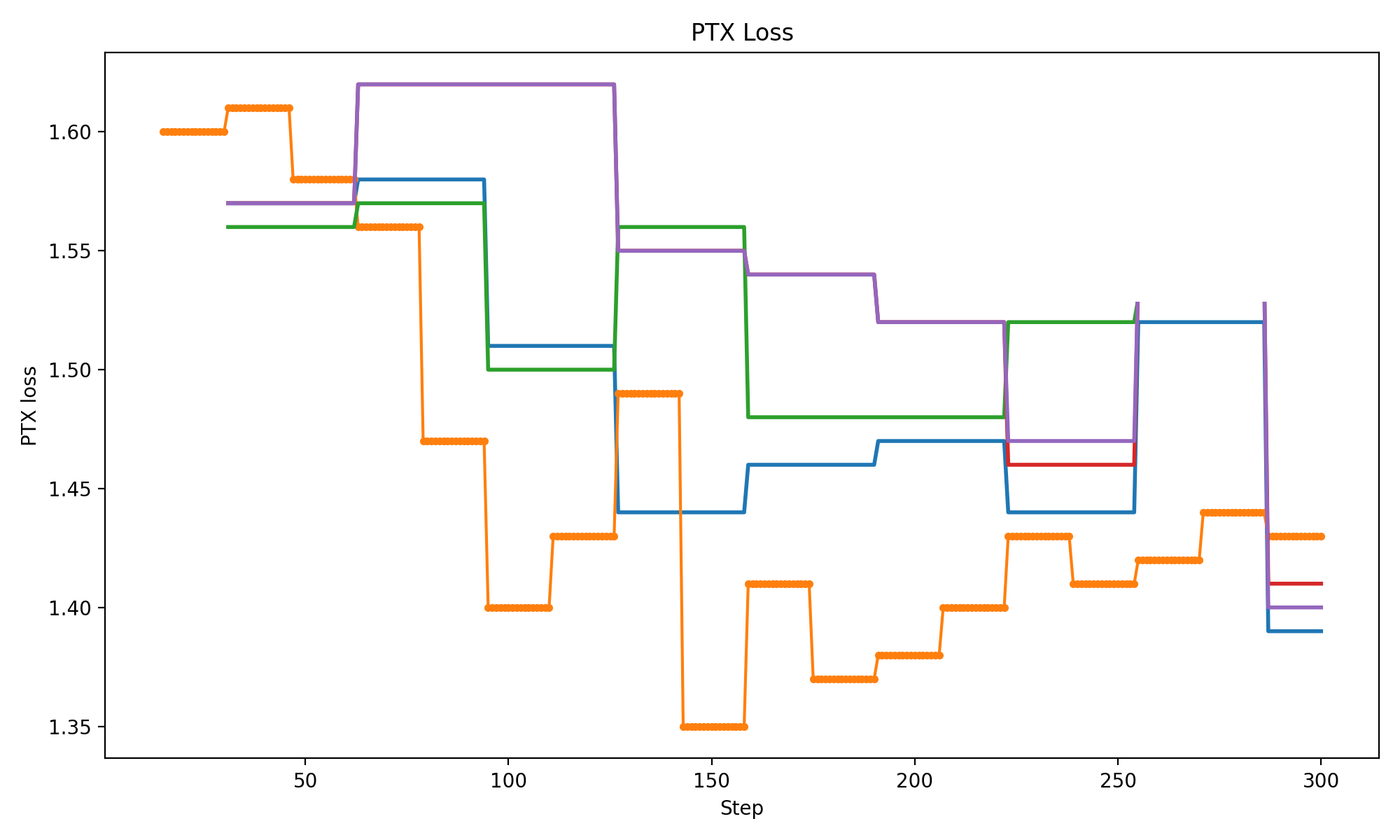}
        \caption{PTX loss comparison.}
        \label{fig:ptx_loss}
    \end{subfigure}
    \caption{Training dynamics comparison across methods.}
    \label{fig:training_dynamics}
\end{figure*}

\paragraph{Convergence and optimization effectiveness.}
Figure~\ref{fig:train_return} compares training return across methods. KL-DRTO attains the highest return with a clear upward trend, while $\chi^2$-DRTO consistently outperforms PPO and RTO; PPO stays near zero and RTO remains negative for most of training. To quantify this, we measure the number of optimization steps needed to first reach $80\%$ of each method's own peak training return: KL-DRTO and $\chi^2$-DRTO cross this threshold within roughly the first third of training, whereas PPO and RTO require more than twice as many steps and, in the case of RTO, never reach a comparable absolute return level. This suggests DRTO improves not only final performance but also the optimization path itself.

\paragraph{Critic stability.}
Figure~\ref{fig:critic_loss} reports the critic loss. Both DRTO variants converge to a noticeably lower and more stable critic loss than PPO and RTO, with $\chi^2$-DRTO the lowest, indicating better-conditioned value learning that supports stable policy optimization.

\paragraph{Preserving the pretraining objective.}
Figure~\ref{fig:ptx_loss} reports the auxiliary PTX loss, which tracks deviation from the pretraining distribution. Both DRTO variants maintain a lower PTX loss than PPO and RTO; in particular, KL-DRTO achieves strong return improvements without drifting away from the base language model.

\section{Implementation Details}
\label{app:implementation}

This appendix provides implementation details for all baselines and robust variants in our DRTO study,
including the training pipeline, shared hyperparameters, and method-specific configurations.

\paragraph{Training pipeline.}
All experiments use \texttt{OpenRLHF/Llama-3-8b-sft-mixture} as the SFT initialization for the actor.
For methods with PPO updates, we use a shared reward model checkpoint as the reward function and as the critic initialization.
For RTO-style methods, we additionally load a DPO policy checkpoint to provide token-level preference shaping.

\paragraph{Datasets.}
Preference learning (DPO) uses the UltraFeedback preference dataset.
Reinforcement learning (PPO/RTO/DRTO) uses a prompt-only OpenMath dataset.
Unless stated otherwise, we use \texttt{max\_samples=10000} prompts and \texttt{seed=42}.

\paragraph{Shared RLHF setup.}
Across PPO/RTO/DRTO runs, We apply a cosine learning rate schedule with 3\%
warming steps and 10\% minimum learning rate.
We use DeepSpeed ZeRO-2 with \texttt{adam\_offload}, \texttt{bf16},
\texttt{flash\_attn}, and \texttt{gradient\_checkpointing}.

\paragraph{PPO implementation details.}
Our PPO-based runs follow the OpenRLHF implementation and enable reward normalization. We use GAE with \texttt{lambd=0.95} and \texttt{gamma=1.0}. For DRTO, span cut points are selected from stored rollout probabilities with minimum span length 16, maximum span number 8, and cut probability threshold $p_{\mathrm{cut}}=0.05$; each span loss is the mean token-level PPO loss in that span, and the nominal span distribution is response-balanced as in \eqref{eq:span_nominal_distribution}. The DRO statistics ($\bar L$, $\sigma$ for $\chi^2$-DRTO; $\eta^\star$ and $\mathbb{Q}^\star$ for KL-DRTO) are computed once per train batch (256 prompt-response rollouts) over all spans in that batch, so the adversary sees the full minibatch span population rather than a per-microbatch slice. The adversarial weights $\mathbb{Q}^\star$ are detached (no gradient through the dual variable), so the resulting actor loss is a weighted sum of differentiable per-span PPO losses.
We list the hyperparameters for each method below.
\begin{center}
\begin{minipage}[t]{0.48\linewidth}
\vspace{0pt}\centering

\begin{tabular}[t]{p{0.62\linewidth} p{0.30\linewidth}}
\hline
\multicolumn{2}{c}{\textbf{DPO (UltraFeedback)}}\\
\hline
Learning rate & $5\text{e-}7$ \\
Batch size & 256 \\
DPO KL coefficient ($\beta$) & 0.1 \\
\hline
\end{tabular}

\vspace{0.8em}

\begin{tabular}[t]{p{0.62\linewidth} p{0.30\linewidth}}
\hline
\multicolumn{2}{c}{\textbf{RTO (OpenMath)}}\\
\hline
Rollout / train batch size & 256 / 256 \\
Actor / critic learning rate & 1e-5 / 5e-5 \\
Initial KL coefficient & 0.01 \\
DPO reward scale / clip & 0.05 / 0.05 \\
PTX coefficient & 0.05 \\
\hline
\end{tabular}

\end{minipage}
\hfill
\begin{minipage}[t]{0.48\linewidth}
\vspace{0pt}\centering

\begin{tabular}[t]{p{0.62\linewidth} p{0.30\linewidth}}
\hline
\multicolumn{2}{c}{\textbf{PPO (OpenMath)}}\\
\hline
Rollout / train batch size & 512 / 256 \\
Actor / critic learning rate & 1e-5 / 5e-5 \\
Initial KL coefficient & 0.01 \\
PPO clip ($\epsilon$) & 0.2 \\
PTX coefficient & 0.05 \\
\hline
\end{tabular}

\end{minipage}
\end{center}

\vspace{0.6em}

\begin{center}
\begin{minipage}[t]{0.48\linewidth}
\vspace{0pt}\centering

\begin{tabular}[t]{p{0.62\linewidth} p{0.30\linewidth}}
\hline
\multicolumn{2}{c}{\textbf{KL-DRTO (OpenMath)}}\\
\hline
Rollout / train batch size & 512 / 256 \\
Actor / critic learning rate & 1e-5 / 5e-5 \\
Initial KL coefficient & 0.005 \\
DPO reward scale / clip & 0.05 / 0.05 \\
PTX coefficient & 0.05 \\
Span min len / max num & 16 / 8 \\
Cut probability $p_{\mathrm{cut}}$ & 0.05 \\
\hline
KL radius $\rho$ & 0.01 \\
\hline
\end{tabular}

\end{minipage}
\hfill
\begin{minipage}[t]{0.48\linewidth}
\vspace{0pt}\centering

\begin{tabular}[t]{p{0.62\linewidth} p{0.30\linewidth}}
\hline
\multicolumn{2}{c}{\textbf{$\chi^2$-DRTO (OpenMath)}}\\
\hline
Rollout / train batch size & 512 / 256 \\
Actor / critic learning rate & 1e-5 / 5e-5 \\
Initial KL coefficient & 0.005 \\
DPO reward scale / clip & 0.05 / 0.05 \\
PTX coefficient & 0.05 \\
Span min len / max num & 16 / 8 \\
Cut probability $p_{\mathrm{cut}}$ & 0.05 \\
\hline
$\chi^2$ radius $\rho$ & 0.1 \\
\hline
\end{tabular}

\end{minipage}
\end{center}

\paragraph{Code pipeline.}
For coding experiments we replace the math pipeline with a code-aligned counterpart. The actor is initialized from \texttt{Qwen3-8B}; the reward model is a \texttt{Qwen3-1.7B} preference model, which also serves as the critic initialization. Preference learning (DPO) uses the \texttt{UltraInteract}~\citep{ultrainteract} preference dataset, and the reinforcement-learning stage uses prompt-only rollouts drawn from \texttt{UltraInteract}. Evaluation uses \texttt{HumanEval}~\citep{humaneval}, \texttt{MBPP}~\citep{mbpp,mbppplus}, and \texttt{LiveCodeBench}~\citep{livecodebench} via the \texttt{OpenCompass}~\citep{opencompass} harness with greedy decoding (pass@1). All other training hyperparameters (batch sizes, optimizer, KL/PTX coefficients, GAE parameters, span-construction settings, and ambiguity radii) follow the corresponding rows of the math pipeline above.

\section{Response comparison}
\label{app:qualitative_examples}
To further illustrate qualitative differences among methods, we present a representative evaluation example and compare the responses produced by RTO and both DRTO variants on the same prompt in Table~\ref{tab:qualitative_drto_example}.

\begin{table*}[t]
\caption{Example responses on the same prompt. RTO misses the pack price multiplication, while both DRTO variants return the correct total cost.}
\centering
\small
\setlength{\tabcolsep}{6pt}
\begin{tabular}{p{0.97\textwidth}}
\toprule
\textbf{Question.} Terry eats 2 yogurts a day. They are currently on sale at 4 yogurts for \$5.00. How much does he spend on yogurt over 30 days? \\
\midrule
\textbf{RTO.}\newline
{\footnotesize\ttfamily
Terry eats 2 yogurts a day, so over 30 days he eats 2*30=<<2*30=60>>60 yogurts.\newline
They are currently on sale at 4 yogurts for \$5.00, so Terry spends 60/4=\$<<60/4=15>>15.00 on yogurt over 30 days.\newline
\#\#\#\# 15
} \\
\midrule
\textbf{$\chi^2$-DRTO ($\rho=0.1$).}\newline
{\footnotesize\ttfamily
Terry eats 2 yogurts a day, so over 30 days he eats 2*30=<<2*30=60>>60 yogurts.\newline
They are currently on sale at 4 yogurts for \$5.00, so Terry spends \$5.00 for every 4 yogurts.\newline
To find out how much Terry spends on 60 yogurts, we can divide 60 by 4 to find out how many sets of 4 yogurts he needs: 60/4=<<60/4=15>>15 sets of 4 yogurts.\newline
Since each set of 4 yogurts costs \$5.00, Terry spends 15*\$5.00=<<15*5=75>>\$75.00 on yogurt over 30 days.\newline
\#\#\#\# 75
} \\
\midrule
\textbf{KL-DRTO ($\rho=0.01$).}\newline
{\footnotesize\ttfamily
Terry eats 2 yogurts a day, so over 30 days he eats 2*30=<<2*30=60>>60 yogurts.\newline
The sale is 4 yogurts for \$5.00, so the price per yogurt is \$5.00/4 = \$<<5/4=1.25>>1.25.\newline
Therefore, Terry spends 60 * \$1.25 = \$<<60*1.25=75>>75 on yogurt over 30 days.\newline
\#\#\#\# 75
} \\
\bottomrule
\end{tabular}
\vspace{2pt}
\label{tab:qualitative_drto_example}
\end{table*}

\newpage

\end{document}

%% file: math_commands.tex
\usepackage{amsmath,amsfonts,bm}

\def\eqref#1{equation~\ref{#1}}

\def\1{\bm{1}}

\DeclareMathAlphabet{\mathsfit}{\encodingdefault}{\sfdefault}{m}{sl}
\SetMathAlphabet{\mathsfit}{bold}{\encodingdefault}{\sfdefault}{bx}{n}

\newcommand{\E}{\mathbb{E}}

\newcommand{\Var}{\mathrm{Var}}



%% file: lpsymbols.tex
\usepackage{amsmath,amssymb}

\def\bm{{\mathbf m}}

\def\texitem#1{\par\smallskip\noindent\hangindent 25pt
               \hbox to 25pt {\hss #1 ~}\ignorespaces}

